\begin{document}
\mainmatter              
\title{A Consistent Diffusion-Based Algorithm for Semi-Supervised Graph Learning}
\titlerunning{Semi-Supervised Graph Learning}  
%
\author{Thomas Bonald\thanks{Contact author: \email{thomas.bonald@telecom-paris.fr}} \and Nathan De Lara}
\authorrunning{Thomas Bonald, Nathan De Lara} 

\institute{Institut Polytechnique de Paris, France}
\maketitle              

\begin{abstract}
The task of semi-supervised  classification aims at assigning labels to all nodes of a graph based on the  labels known for a few nodes, called the seeds. One of the most popular algorithms relies on the principle of heat diffusion, where the labels of the seeds are spread by thermo-conductance and  the temperature of each node at equilibrium is used as a score function for each label. In this paper, we prove that this algorithm is not consistent unless the temperatures of the nodes at equilibrium are centered before scoring. This crucial step does not only make the algorithm provably consistent on a  block model but  brings significant performance gains on real graphs.
\end{abstract}

\section{Introduction}


The principle of heat diffusion has proved instrumental in graph mining \cite{kondor2002diffusion}. 
  It has been applied for  many different tasks, including   pattern matching  \cite{thanou2017learning},  ranking   \cite{ma2011mining},  embedding   \cite{donnat2018learning},    clustering \cite{tremblay2014graph},  classification \cite{zhu2003semi,zhu2005semi,berberidis2018adadif,DBLP:journals/corr/abs-1902-06105} and feature propagation \cite{pmlr-v198-rossi22a}.
In this paper, we focus on the  task of semi-supervised node classification: given  labels known for a few nodes of the graph, referred to as the {\it seeds}, how to infer the labels of the other nodes? 
  A popular approach consists in using   diffusion in the graph, under boundary constraints,  a problem known in physics as the Dirichlet problem \cite{zhu2003semi}.
 Specifically, one Dirichlet problem is solved per label, setting at 1 the  temperature of the seeds with this label and at 0 the temperature of the other seeds. Each node is then assigned the label with the highest temperature over the different Dirichlet problems.
In this paper, we prove using a simple block model that this algorithm is actually not consistent, unless the temperatures are {\it centered} before label assignement. This  step of temperature centering does not only make the algorithm consistent but also brings substantial performance gains    on real datasets. 
This is a crucial observation given the popularity of the algorithm\footnote{The number of citations of the paper \cite{zhu2003semi} exceeds 4\,000   in 2023, according to Google Scholar.}.


The rest of this paper is organized as follows. In section \ref{sec:dir}, we introduce the Dirichlet problem on graphs. Section \ref{sec:algo} describes our algorithm for   node classification. The analysis showing the 
consistency of our algorithm on a simple block model is presented in 
section \ref{sec:model}.  Section \ref{sec:exp} presents some experimental results and section \ref{sec:conc} concludes the paper.

\section{Dirichlet problem on graphs}
\label{sec:dir}

In this section, we introduce the Dirichlet problem on graphs 
and characterize the solution, used later in the analysis. 

\subsection{Heat equation}
\label{ssec:pb}

Consider an undirected graph $G$ with $n$ nodes indexed from $1$ to $n$. Denote by $A$ its adjacency matrix. This is a symmetric matrix with non-negative entries.   Let $d = A1$ be the degree vector, which is assumed positive, and $D = \text{diag}(d)$. The Laplacian matrix is defined by:
$$
L = D - A.
$$

Now let  $S$ be some strict subset  of $\{1, \dots, n\}$ and assume that each node $i\in S$ is assigned some fixed temperature $T_i$. 
We are interested in the evolution of the temperatures of the other nodes, we refer to as the {\it free} nodes. We assume that heat exchanges occur through each edge of the graph proportionally to the temperature difference between the corresponding nodes, so that:
$$
\forall i \notin S, \quad \dfrac{dT_i}{dt} = \underset{j=1}{\overset{n}{\sum}}A_{ij}(T_j - T_i),
$$
that is,
$$
\forall i \notin S,\quad  \dfrac{dT_i}{dt} = -(LT)_i,
$$
where $T$ is the vector of temperatures, of dimension $n$. This is the heat equation in discrete space. At equilibrium, the vector  $T$ satisfies Laplace’s equation:
\begin{equation}
    \label{eq:laplace}
    \forall i \notin S,\quad (LT)_i = 0.
\end{equation}
With the boundary constraint giving the temperature $T_i$ for each node $i \in S$, this defines a Dirichlet problem. Observe that Laplace's equation \eqref{eq:laplace} can be written equivalently:
\begin{equation}
    \label{eq:laplace2}
    \forall i \notin S,\quad T_i = (PT)_i,
\end{equation}
where $P =D^{-1}A$ is the transition matrix of the random walk in the graph.

\subsection{Solution to the Dirichlet problem}
\label{ssec:sol}

We now characterize the solution to the Dirichlet problem  \eqref{eq:laplace}. Without any loss of generality, we assume that free nodes  (i.e., not in $S$) are indexed from 1 to $n-s$ so that the vector of temperatures can be written
$$
T = \begin{bmatrix}X\\ Y\end{bmatrix},
$$
where $X$ is the  vector of temperatures of free nodes at equilibrium, of dimension $n-s$, and $Y$ is the vector of temperatures of the seeds, of dimension $s$. Writing 
the transition matrix in block form as
$$
P = \begin{bmatrix}Q & R \\ \cdot & \cdot\end{bmatrix},
$$
it follows from \eqref{eq:laplace2} that:
\begin{equation}
    \label{eq:linear}
    \quad X = QX + RY,
\end{equation}
so that:
\begin{equation}
    \label{eq:solinear}
    \quad X = (I-Q)^{-1}RY.
\end{equation}
Note that the inverse of $I-Q$ exists whenever  the graph is connected \cite{chung}.
The solution to the Dirichlet problem exists and is unique.

\section{Node classification algorithm}
\label{sec:algo}

In this section, we introduce a node classification algorithm based on  the Dirichlet problem.
The objective is  to infer the labels of all nodes given the  labels of a few nodes called the \textit{seeds}.
Our algorithm is a simple modification of the popular method proposed by \cite{zhu2003semi}. Specifically, we propose to {\it center}  temperatures  before label assignment.

\subsection{Binary classification}

When there are only two different labels, say  0 and 1, the classification follows from the solution of a single Dirichlet problem.
The idea is to set at 0  the temperature of  seeds with label 0  and  at 1  the temperature of  seeds with label 1.
The solution to this Dirichlet problem gives temperatures between 0 and 1 to the free nodes, as illustrated by Figure \ref{fig:karate} for the Karate Club graph   \cite{zachary1977information}.

\begin{figure}[h]
    \centering
 \subfloat[Ground truth]{
    \includegraphics[width=0.47\linewidth]{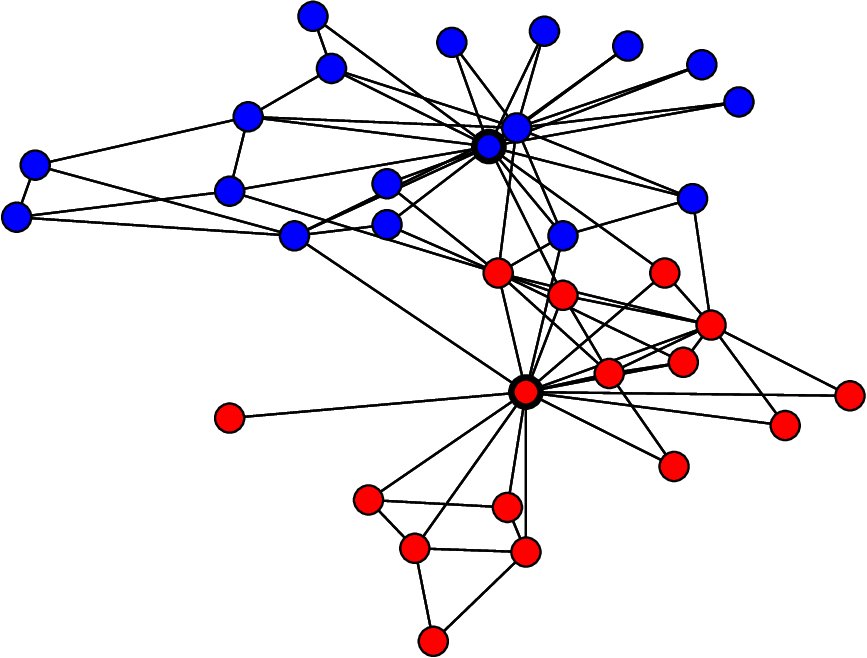}}
    \subfloat[Solution to the Dirichlet problem]{ \includegraphics[width=0.47\linewidth]{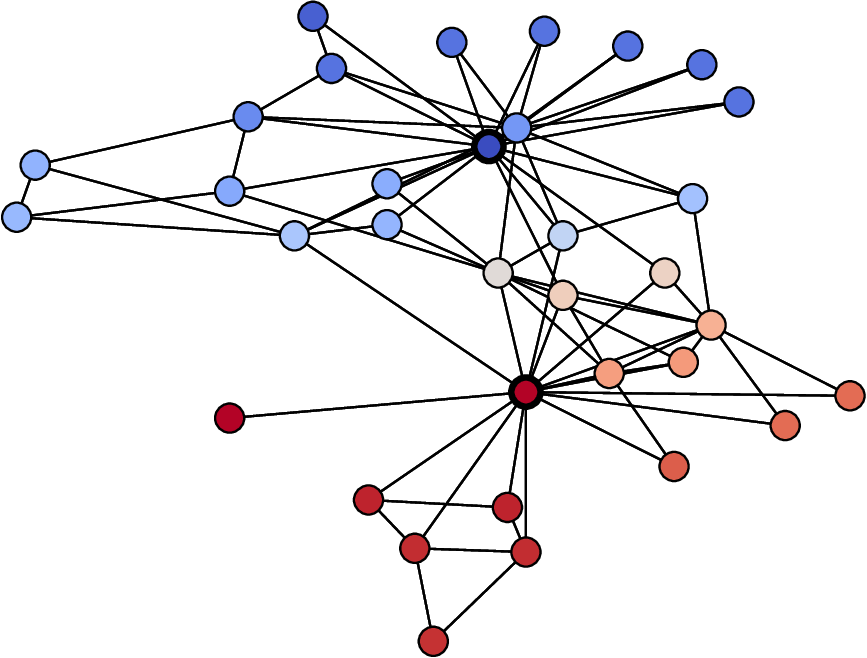}}
    \caption{Binary classification of the Karate Club graph with 2 seeds (indicated with a black circle). Blue nodes have label 0, red nodes have label 1.}
    \label{fig:karate}
\end{figure}

A natural decision rule is to use a threshold of $1/2$ for classification: any free node with temperature above $1/2$ at equilibrium is  assigned label 1, while any other free node is assigned label 0.  The analysis of Section \ref{sec:model} suggests that it is preferable to set the threshold to the mean temperature at equilibrium, 
\begin{equation}\label{eq:thresh}
\bar T = \frac 1 n \sum_{i=1}^n T_i.
\end{equation}
Specifically, any free node with temperature above $\bar T$ at equilibrium is assigned  label 1, while any  other free node is assigned  label 0. Equivalently, temperatures are {\it centered} by their mean before classification: after centering, free nodes with positive temperature are  assigned  label 1, the others are assigned  label 0.

It is worth noting that the threshold \eqref{eq:thresh} is the mean temperature of {\it all} nodes at equilibrium, including seed nodes. Another option, suggested by the {\it class mass normalization} step of \cite{zhu2003semi} for instance, is to set the threshold at  the mean temperature of {\it free} nodes at equilibrium. This variant of the algorithm is not provably consistent, however. 

\subsection{Multi-class classification}
In the general case with $K$   labels, we use 
 a \textit{one-against-all} strategy: the seeds of each label alternately serve as hot sources (temperature 1) while all the other seeds serve as cold sources (temperature 0). After centering the temperatures (so that the mean temperature of each diffusion is equal to 0), each node is assigned  the label that maximizes its temperature. This algorithm, we refer to as the Dirichlet classifier, is parameter-free.

\begin{algorithm}[ht]
\caption{Dirichlet classifier}
\begin{algorithmic}[1]
\REQUIRE Seed set $S$ and associated labels $y\in\{1,\ldots,K\}$
\FOR{$k$ in $\{1,\ldots,K\}$}
    \STATE $T = 0$
    \FOR{$i \in S$}
        \IF{$y_i = k$}
            \STATE $T_i = 1$
        \ENDIF
    \ENDFOR
    \STATE $T \leftarrow \text{Dirichlet}(S, T)$
    \STATE $\Delta{(k)} \leftarrow T - \frac 1 n \sum_{i=1}^n T_i$
\ENDFOR
    \FOR{$i \not \in S$}
\STATE $\hat y_i = \arg\max_{k=1,\ldots,K}(\Delta_i{(k)})$
    \ENDFOR
\RETURN $\hat y$, predicted labels of free nodes (outside $S$)
\end{algorithmic}
\label{algo:maxdiff}
\end{algorithm}

The solution to the Dirichlet problem (line 8 of the algorithm) can be obtained  either from \eqref{eq:solinear} or from  iterations of the fixed-point equation \eqref{eq:linear}.

%

\section{Analysis}
\label{sec:model}

In this section, we prove the consistency of Algorithm \ref{algo:maxdiff} on a simple block model. In particular, we highlight the importance of temperature centering (line 9 of the algorithm)  for the consistency of the algorithm.

\subsection{Block model}

Consider a graph of $n$ nodes consisting of $K$ blocks  of respective sizes $n_1,\ldots,n_K$, forming a partition of the set of nodes.  There are  $s_1,\ldots,s_K$ seeds in these blocks, which have labels $1,\ldots,K$, respectively. Intra-block edges have weight $p$ and inter-block edges have weight $q$. 
We expect  the algorithm to assign label $k$ to all nodes of  block $k$,  for all $k=1,\ldots,K$, whenever $p>q$, i.e.,  the blocks are {\it assortative} \cite{newman2003mixing}.

\subsection{Dirichlet problem}

Consider  the Dirichlet problem when  the temperature of the $s_1$ seeds of block 1 is set to 1 and the temperature  of the other seeds is set to 0. We have an explicit  solution to this Dirichlet problem, given by Lemma \ref{prop:dirichlet}. All proofs are deferred to the appendix.

\begin{lemma}
\label{prop:dirichlet}
Let $T_k$ be the temperature of free nodes of  block $k$ at equilibrium. We have:
    \begin{align*}
(s_1(p-q) + nq) T_1  &= s_1 (p-q) + n\bar T q,\\
(s_k(p-q) + nq) T_k  &= n\bar T q\quad \quad k=2,\ldots,K,
\end{align*}
where $\bar T$ is the average temperature, given by:
$$
 \bar T =\frac 1 n \sum_{i=1}^n T_i =  \left(\frac{s_1}{n} \frac{n_1(p-q) + nq}{ s_1(p-q) + nq}\right) / \left(1-\sum_{k=1}^K \frac{(n_k - s_k)q}{s_k(p-q)+nq}\right).
$$
\end{lemma}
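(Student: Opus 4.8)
The plan is to reduce the $n-s$ unknown temperatures to the $K$ block temperatures using the permutation symmetry of the block model, write Laplace's equation~\eqref{eq:laplace2} block by block, and then solve the resulting small linear system for $T_1,\dots,T_K$ and for $\bar T$.

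First I would observe that the solution assigns a common temperature to all free nodes of a given block. Indeed, both the edge weights and the boundary values of this Dirichlet problem are invariant under any permutation of the free nodes inside a single block, and the solution to the Dirichlet problem is unique (Section~\ref{ssec:sol}); hence it is fixed by every such permutation, so the free nodes of block $k$ share a common temperature $T_k$. Write $\sigma_k=\sum_{j\in\text{block }k}T_j$ for the total temperature of block $k$, so that $\sum_{k=1}^K\sigma_k=n\bar T$. Since the $s_1$ seeds of block $1$ are at temperature $1$ and all other seeds are at temperature $0$, we have $\sigma_1=s_1+(n_1-s_1)T_1$ and $\sigma_k=(n_k-s_k)T_k$ for $k\ge 2$.

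Next I would write Laplace's equation for a free node $i$ of block $k$. A node of block $k$ connects to block $k$ with total weight $n_k p$ (including a self-loop; the case without self-loops is identical up to moving $pT_k$ between the two sides) and to every other node with weight $q$, so~\eqref{eq:laplace2} rearranges to
$$
\bigl(n_k(p-q)+nq\bigr)\,T_k \;=\; p\,\sigma_k + q\,(n\bar T-\sigma_k)\;=\;(p-q)\,\sigma_k + nq\,\bar T .
$$
Substituting the expressions for $\sigma_k$ and using $n_k(p-q)-(n_k-s_k)(p-q)=s_k(p-q)$ to cancel the term proportional to $T_k$ that appears on both sides yields exactly the two identities of the lemma, namely $(s_1(p-q)+nq)T_1=s_1(p-q)+n\bar T q$ and $(s_k(p-q)+nq)T_k=n\bar T q$ for $k\ge 2$.

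Finally, for $\bar T$ I would go back to $n\bar T=\sigma_1+\sum_{k\ge 2}\sigma_k$. Using the two identities just proved to express $T_1$ and each $T_k$ ($k\ge 2$) as explicit affine functions of $\bar T$, substituting into $\sigma_1=s_1+(n_1-s_1)T_1$ and $\sigma_k=(n_k-s_k)T_k$, and collecting terms gives a single affine equation in $\bar T$: the coefficient of $\bar T$ is $n\bigl(1-q\sum_{k=1}^K(n_k-s_k)/(s_k(p-q)+nq)\bigr)$, and the constant term simplifies (after the identity $s_1 a_1+(n_1-s_1)s_1(p-q)=s_1(n_1(p-q)+nq)$ with $a_1=s_1(p-q)+nq$) to $s_1(n_1(p-q)+nq)/a_1$; solving for $\bar T$ produces the stated formula. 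There is no genuine conceptual obstacle here: the proof is essentially bookkeeping, and the only points requiring care are the symmetry reduction at the start (which rests on uniqueness of the Dirichlet solution) and keeping the expansion of $\sigma_1$ straight; one should also note that the closed form for $\bar T$ is valid precisely when its denominator is nonzero, which holds under the standing connectivity assumption.
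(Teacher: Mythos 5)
Your proof is correct and follows essentially the same route as the paper: write Laplace's equation $d_iT_i=\sum_j A_{ij}T_j$ block by block, cancel the $(n_k-s_k)(p-q)T_k$ term to get the two stated identities, and then substitute back into $n\bar T=s_1+\sum_k(n_k-s_k)T_k$ to solve for $\bar T$. The only (welcome) addition is your explicit justification, via uniqueness and permutation symmetry, that all free nodes of a block share a common temperature, which the paper takes for granted.
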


\subsection{Classification}
\label{ssec:clf}

We now state the main result of the paper: the Dirichlet classifier is a consistent algorithm for the block model, in the sense that all nodes are correctly classified whenever $p> q$. 

\begin{theorem}\label{theo:dirichlet}
If $p> q$, then the predicted label  of each free node $i$ of  block $k$ is $\hat y_i=k$,  for any  $n_1,\ldots,n_K$ (label distribution) and $s_1,\ldots,s_K$ (seed distribution).
\end{theorem}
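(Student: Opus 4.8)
The plan is to deduce everything from Lemma~\ref{prop:dirichlet} together with a sign analysis. First I would exploit the symmetry of the block model: the Dirichlet problem associated with label $k$ (seeds of block $k$ at temperature $1$, all other seeds at temperature $0$) is obtained from the one analysed in Lemma~\ref{prop:dirichlet} simply by relabelling block $k$ as ``block $1$''. So, writing $T_m^{(k)}$ for the equilibrium temperature of the free nodes of block $m$ in the diffusion associated with label $k$, and $\bar T^{(k)}$ for the corresponding mean temperature, Lemma~\ref{prop:dirichlet} gives
\begin{align*}
(s_k(p-q)+nq)\,T_k^{(k)} &= s_k(p-q) + n\bar T^{(k)} q,\\
(s_m(p-q)+nq)\,T_m^{(k)} &= n\bar T^{(k)} q \qquad (m\neq k).
\end{align*}

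Next I would compute the centered scores of a fixed free node $i$ lying in block $k$. Subtracting $\bar T^{(k)}$ from the first identity (a one-line computation, the term $n\bar T^{(k)}q$ cancelling) yields its score for label $k$,
$$
\Delta_i(k) = T_k^{(k)} - \bar T^{(k)} = \frac{s_k(p-q)\,\bigl(1-\bar T^{(k)}\bigr)}{s_k(p-q)+nq},
$$
while the second identity with $m=k$ gives its score for any other label $\ell\neq k$,
$$
\Delta_i(\ell) = T_k^{(\ell)} - \bar T^{(\ell)} = -\,\bar T^{(\ell)}\,\frac{s_k(p-q)}{s_k(p-q)+nq}.
$$

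The theorem then follows from the signs of these expressions. Since $p>q$ we have $s_k(p-q)>0$, so the common denominator $s_k(p-q)+nq$ is positive. All temperatures lie in $[0,1]$ (the seeds by construction, the free nodes by the maximum principle for the Dirichlet problem), hence $\bar T^{(k)}\le 1$, which makes $\Delta_i(k)\ge 0$; on the other hand the $s_\ell$ hot seeds of block $\ell$ each contribute $1$ to the mean, so $\bar T^{(\ell)}\ge s_\ell/n>0$, which makes $\Delta_i(\ell)<0$. Therefore $\Delta_i(k)>\Delta_i(\ell)$ for every $\ell\neq k$, so $\hat y_i=\arg\max_{k}\Delta_i(k)=k$, as claimed.

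I expect essentially no obstacle inside this argument: the entire substance sits in Lemma~\ref{prop:dirichlet}, and once it is available the theorem is a short computation. The only points needing a little care are the symmetry reduction (transporting the lemma from block $1$ to an arbitrary block $k$) and the two elementary bounds $\bar T^{(\ell)}>0$ and $\bar T^{(k)}\le 1$. It is also worth emphasising in the write-up where centering is decisive: without it the scores would be the raw temperatures, whose comparison $T_k^{(k)}-T_k^{(\ell)}=\bigl(s_k(p-q)+nq(\bar T^{(k)}-\bar T^{(\ell)})\bigr)/(s_k(p-q)+nq)$ can be negative when block $k$ is small and block $\ell$ is large, so that centering is exactly what removes the spurious dependence on the seed and label distributions.
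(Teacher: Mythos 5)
Your proposal is correct and follows essentially the same route as the paper: apply Lemma~\ref{prop:dirichlet} (transported by symmetry to each label), compute the centered deviations, and conclude by a sign analysis using $\bar T\in(0,1]$ and $\bar T>0$. The only difference is cosmetic — you fix the node's block and vary the label rather than fixing the label and varying the block — and you are in fact slightly more explicit than the paper in justifying the bounds $0<\bar T^{(\ell)}$ and $\bar T^{(k)}\le 1$.
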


Observe that the temperature centering is critical for consistency. In the absence of centering,  free nodes of block 1 are correctly classified if and only if their temperature is the highest in the Dirichlet problem associated with label 1.
In view of Lemma \ref{prop:dirichlet}, this means that for all $k=2,\ldots,K$,
\begin{align*}
&s_1 q \frac{n_1(p-q) + nq}{s_1(p-q) + nq} + s_1(p-q)\left(1-\sum_{j=1}^K \frac{(n_j - s_j)q}{s_j(p-q)+nq}\right)\\
&> s_k q \frac{n_k(p-q) + nq}{s_k(p-q) + nq}.
\end{align*}
This condition might be violated even if $p>q$, depending on the parameters  $n_1,\ldots,n_K$ and $s_1,\ldots,s_K$. 
In the simplest case of $K=2$ blocks, with $p=10^{-1}$ and $q = 10^{-2}$ for instance, the classification is incorrect in the following two asymmetric cases:
\begin{description}
\item{\bf Seed asymmetry}  (blocks of same size but different number of seeds): \\ $n_1 = n_2=100; s_1 = 10, s_2=5$,
\item{\bf Label asymmetry} (blocks with the same number of seeds but different sizes):\\ $n_1 = 100, n_2=10; s_1 = s_2=5$ .
\end{description}
This sensitivity of the algorithm to both forms of asymmetry will be confirmed by the experiments. 
The step of temperature centering is crucial for consistency.

\section{Experiments}
\label{sec:exp}

In this section, we show the impact of temperature centering on the quality of classification using both synthetic and real data.
The Python code is available as a Jupyter notebook in Python\footnote{\url{https://perso.telecom-paris.fr/bonald/notebooks/diffusion.ipynb}}, making the experiments fully reproducible.



\subsection{Synthetic data}

We first use the stochastic block model \cite{airoldi2008mixed} to generate graphs with an underlying structure in clusters. This is the stochastic version of the block model used in the analysis. There are $K$ blocks of respective sizes $n_1,\ldots,n_K$. Nodes of the same block are connected with probability  $p$ while nodes in different blocks are connected probability $q$. Nodes in block $k$ have label $k$.
We denote by $s_k$ the number of  seeds in block $k$ and by $s$  the total number of seeds.


\begin{figure}[h]
    \centering
    \subfloat[Seed asymmetry]{\includegraphics[width=0.48\linewidth]{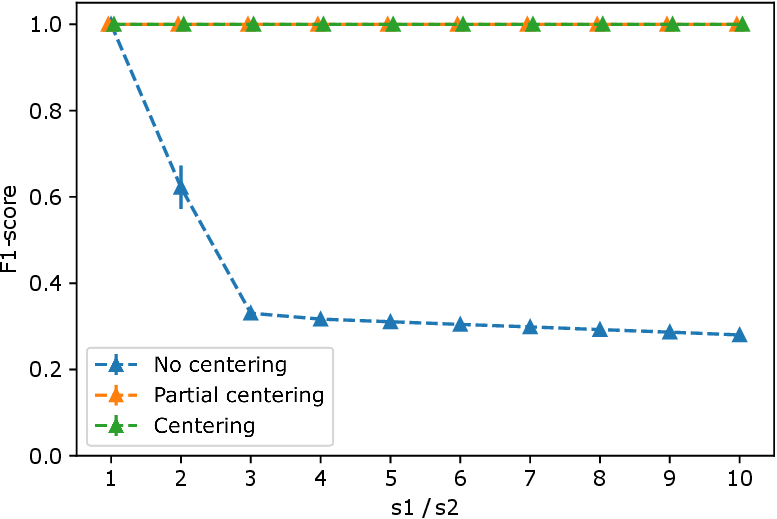}}
    \subfloat[Label asymmetry]{\includegraphics[width=0.48\linewidth]{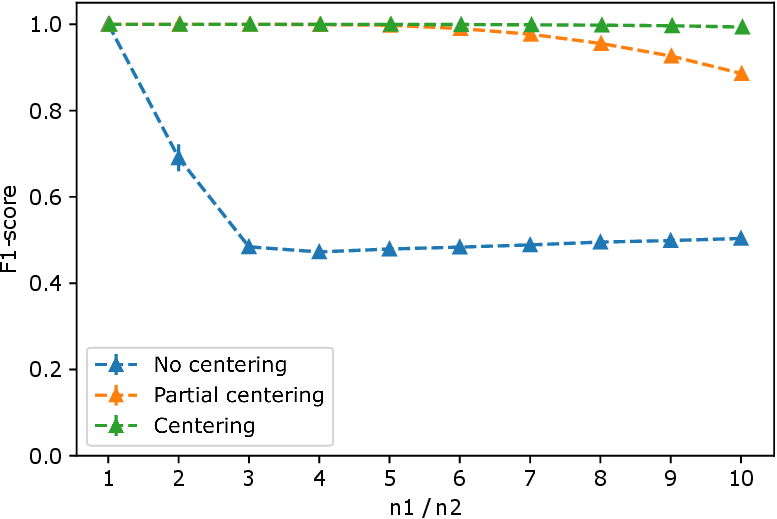}}
    \caption{F1 scores on the stochastic block model (2 labels).}
    \label{fig:sbm}
\end{figure}

We first compare the performance of the algorithms on a binary classification task ($K=2$) for a graph of $n=10\,000$ nodes with $p=10^{-2}$ and $q=10^{-3}$, in two different settings:
\begin{description}
\item{\bf Seed asymmetry} (blocks of same size but different number of seeds): \\
   $n_1= n_2 = 5000$,    $s_2 = 250$,   ratio $s_1/s_2 \in \{1, 2, \dots,10\}$.\\
    (5\% of nodes in block 2 are seeds)
\item{\bf Label asymmetry}  (blocks with the same number of seeds but different sizes):\\  number of nodes  $n=10\,000$, ratio  $n_1/n_2 \in \{1, 2, \dots, 10\}$, $s_1 = s_2= 250$.\\ (5\% of all nodes are seeds)
\end{description}

For each configuration, the experiment is repeated 100 times. Randomness comes both from the generation of the graph and from the selection of the seeds. We report the F1-scores in Figure \ref{fig:sbm} (mean $\pm$ standard deviation). Observe that the variability of the results is very low due to the relatively large size of the graph. As expected, the centered version is much more robust to both forms of asymmetry.  The  variant called {\it partial centering}, where the mean temperature is computed over free nodes only, tends to be less robust to label asymmetry.

We show in Figure \ref{fig:sbm/multi} the same results for $K = 5$ blocks, still with  $n=10\,000$ nodes,  $p=10^{-2}$ and $q=10^{-3}$.  Blocks $2, 3, 4, 5$ have the same size and the same number of seeds. For the experiments on seed asymmetry, each block has $2\,000$ nodes and $5\%$ of nodes in blocks  $2, 3, 4, 5$ are seeds; we only vary the number of seeds in block 1. For the experiments on label asymmetry, there is the same number of seeds for each label, corresponding to an average proportion of  $5\%$ of all nodes.  The performance metric is the F1-score averaged over the 5 labels. The conclusions are the same as with 2 labels.

\begin{figure}[h]
    \centering
   \subfloat[Seed asymmetry]{\includegraphics[width=0.48\linewidth]{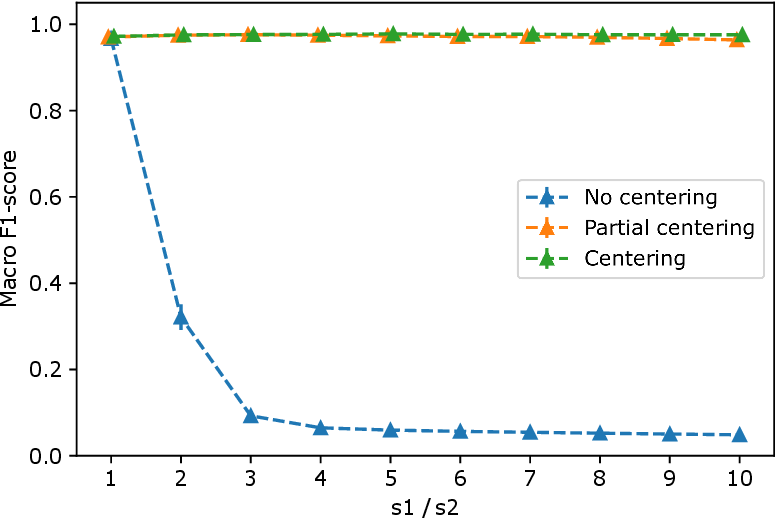}}
    \subfloat[Label asymmetry]{\includegraphics[width=0.48\linewidth]{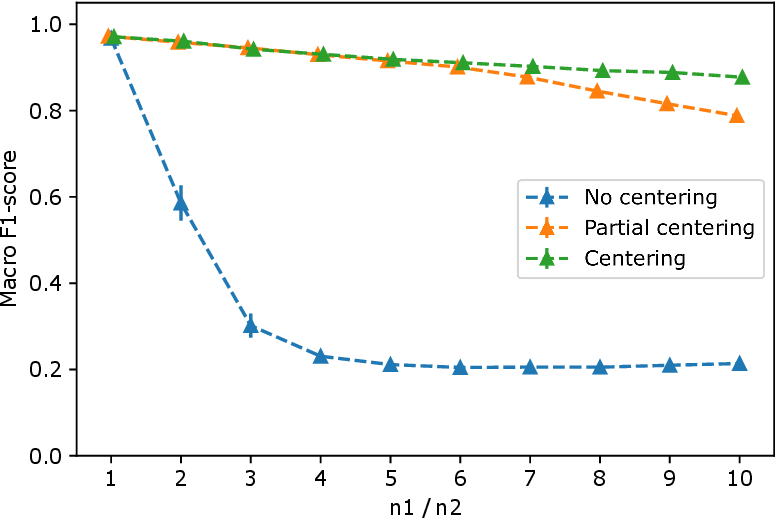}}
    \caption{Macro F1-scores on the  stochastic block model (5 labels).}
    \label{fig:sbm/multi}
\end{figure} 


\subsection{Real data}
\label{ssec:datasets}

We  now focus on  real datasets available from the SNAP collection\footnote{\url{https://snap.stanford.edu/}}  and the  NetSet\footnote{\url{https://netset.telecom-paris.fr/}}  collection, restricting to graphs having ground-truth labels. All graphs are considered undirected.

\begin{table}[ht]
    \centering
    \caption{Overview of the datasets.}
    \begin{tabular}{c|ccc}
        \toprule
        Dataset & \#nodes & \#edges & \#classes  \\
\midrule
        Cora & $2\,708$ & $5\,278$ & 7  \\ 
        Citeseer & $3\,264$ & $4\,536$ & 6 \\
        PubMed & 19\,717& 44\,325& 3\\
        Email& 1\,005 &16\,385  &42\\
        PolBlogs & 1\,490& 16\,716& 2\\
        WikiSchools & $4\ 403$ & $100\,329$ & 16 \\
        WikiVitals & $10\,011$ & $654\,502$ & 11 \\
        WikiVitals+ & $45\,179 $ & $3\,079\,335$ & 11 \\
       \bottomrule
    \end{tabular}
    \label{tab:datasets}
\end{table}

For each dataset, we select seeds uniformly at random.
The process is repeated 100 times. 
The macro-F1 scores (i.e., F1-scores averaged over all classes) are shown in 
Table~\ref{tab:macrof1/node}  for seeds representing  $5\%, 10\%$ or $20\%$ of the nodes. We see that the centered version  outperforms the standard version over all datasets. The performance gains are substantial  for the largest graphs, extracted from Wikipedia. The variance is also lower in all cases, showing the robustness of the algorithm.   Additional results, not reported here, tend to  show that
the variant selected for temperature centering (based on either all nodes or free nodes) has a marginal impact on performance. 




\begin{table}[ht]
    \centering
    \caption{Macro-F1 scores (mean $\pm$ standard deviation) without and with temperature centering.}    
%

  \subfloat[5\% of seeds] { \begin{tabular}{l|cc|c}
    \toprule
        Dataset & No centering & Centering & Variation\\
        \midrule
 Cora & {0.69} $\pm$ 0.02 & {\bf 0.71} $\pm$ 0.02 & $+2\%$ \\
Citeseer & {\bf 0.48} $\pm$ 0.01 & {\bf 0.48} $\pm$ 0.01 & $0\%$ \\
PubMed & {0.76} $\pm$ 0.01 & {\bf 0.78} $\pm$ 0.01 & $+2\%$ \\
Email & 0.12 $\pm$ 0.04 & {\bf 0.22} $\pm$ 0.03 & $+85\%$ \\
PolBlogs & 0.82 $\pm$ 0.12 & {\bf 0.87} $\pm$ 0.01 & $+7\%$ \\
WikiSchools & 0.08 $\pm$ 0.06 &  {\bf 0.44} $\pm$ 0.03 & $+472\%$ \\
WikiVitals & 0.29 $\pm$ 0.06 &  {\bf 0.63} $\pm$ 0.02 & $+116\%$ \\
WikiVitals+ & 0.31 $\pm$ 0.03 & {\bf 0.65} $\pm$ 0.01 & $+112\%$ \\
    \bottomrule
        \end{tabular}}
        
                \subfloat[10\% of seeds] { \begin{tabular}{l|cc|c}
    \toprule
        Dataset & No centering & Centering & Variation\\
        \midrule
 Cora & {0.74} $\pm$ 0.02 & {\bf 0.75} $\pm$ 0.01 & $+1\%$ \\
Citeseer & {\bf 0.52} $\pm$ 0.01 & {\bf 0.52} $\pm$ 0.01 & $0\%$ \\
PubMed & {0.78} $\pm$ 0.01 & {\bf 0.79} $\pm$ 0.00 & $+1\%$ \\
Email & 0.21 $\pm$ 0.04 & {\bf 0.31} $\pm$ 0.03 & $+43\%$ \\
PolBlogs & {0.86} $\pm$ 0.02 & {\bf 0.87} $\pm$ 0.01 & $+1\%$ \\
WikiSchools & 0.13 $\pm$ 0.04 & {\bf 0.50} $\pm$ 0.02 & $+295\%$ \\
WikiVitals & 0.43 $\pm$ 0.04 & {\bf 0.67} $\pm$ 0.01 & $+57\%$ \\
WikiVitals+ & 0.61 $\pm$ 0.01 & {\bf 0.68} $\pm$ 0.01 & $+12\%$ \\
    \bottomrule
        \end{tabular}}

 \subfloat[20\% of seeds] { \begin{tabular}{l|cc|c}
    \toprule
        Dataset & No centering & Centering & Variation\\
        \midrule
Cora & {\bf 0.78} $\pm$ 0.01 & {\bf 0.78} $\pm$ 0.01 & $0\%$ \\
Citeseer & {\bf 0.57} $\pm$ 0.01 & {\bf 0.57} $\pm$ 0.01 & $0\%$ \\
PubMed & {\bf 0.80} $\pm$ 0.00 & {\bf 0.80} $\pm$ 0.00 & $0\%$ \\
Email & 0.32 $\pm$ 0.03 & {\bf 0.40} $\pm$ 0.02 & $+24\%$ \\
PolBlogs &  {\bf 0.87} $\pm$ 0.01 &  {\bf 0.87} $\pm$ 0.01 & $0\%$ \\
WikiSchools & 0.27 $\pm$ 0.03 & {\bf 0.57} $\pm$ 0.02 & $+110\%$ \\
WikiVitals & 0.58 $\pm$ 0.02 & {\bf 0.70} $\pm$ 0.01 & $+22\%$ \\
WikiVitals+ & 0.65 $\pm$ 0.01 & {\bf 0.71} $\pm$ 0.00 & $+9\%$ \\
    \bottomrule
        \end{tabular}}


    \label{tab:macrof1/node}
\end{table}

\section{Conclusion}
\label{sec:conc}

We have proposed a novel approach to node classification based on heat diffusion. Specifically, our technique consists   in  centering the temperatures of each solution to the  Dirichlet problem before classification. We have proved the consistency of this algorithm on a simple block model and  shown  that the temperature centering brings significant performance gains on real datasets. This is  a crucial observation  given the popularity of the algorithm.

The question of the consistency of the algorithm when the mean temperature is computed over free nodes  (instead of all nodes) remains open. Another interesting research perspective is to extend our proof of consistency of the algorithm to {\it stochastic} block models, where edges are drawn at random  \cite{airoldi2008mixed} .



\clearpage

\appendix

\section*{Appendix}

\section{Proof of Lemma \ref{prop:dirichlet}}

\begin{proof}
In view of (2), we have:
\begin{align*}
(n_1(p-q) + nq) T_1 &= s_1 p + (n_1-s_1)pT_1 + \sum_{j\ne 1} (n_j - s_j) qT_j,\\
(n_k(p-q) + nq) T_k& = s_1 q + (n_k-s_k)pT_k + \sum_{j\ne k} (n_j - s_j) qT_j,
\end{align*}
for $ k=2,\ldots,K$.
We deduce:
\begin{align*}
(s_1(p-q) + nq) T_1  &= s_1 p + Uq,\\
(s_k(p-q) + nq) T_k  &= s_1 q + Uq\quad \quad \forall k=2,\ldots,K,
\end{align*}
with 
$$
U = \sum_{j=1}^K  (n_j - s_j) T_j.
$$
The proof then follows from the fact that 
$$
n \bar T =  s_1 + \sum_{j=1}^K  (n_j - s_j) T_j = s_1 + U.
$$
\end{proof}

\section{Proof of Theorem \ref{theo:dirichlet}}

\begin{proof}
Let $\Delta^{(1)}_k = T_k - \bar T$ be the deviation of temperature of non-seed nodes of block $k$ for the Dirichlet problem associated with label 1.
    In view of Lemma \ref{prop:dirichlet}, we have:
\begin{align*}
(s_1(p-q) + nq) \Delta^{(1)}_1  &= s_1 (p-q) (1-\bar T),\\
(s_k(p-q) + nq)\Delta^{(1)}_k  &= -s_k(p-q) \bar T \quad \quad k=2,\ldots,K,
\end{align*}
For $p>q$, using the fact that $\bar T \in (0,1)$, we get $\Delta^{(1)}_1 > 0$ and $\Delta^{(1)}_k<0$ for all $k=2,\ldots,K$. By symmetry, for each label $l = 1,\ldots,K$,
$\Delta^{(l)}_l > 0$ and $\Delta^{(l)}_k<0$ for all $k\ne l$.
We deduce that for each block $k$, $\hat y_i=\arg\max_{l}\Delta^{(l)}_k = k$ for each free node $i$ of  block $k$.
\end{proof}

\clearpage

\bibliographystyle{spmpsci}
\bibliography{biblio}

\begin{thebibliography}{10}
\providecommand{\url}[1]{{#1}}
\providecommand{\urlprefix}{URL }
\expandafter\ifx\csname urlstyle\endcsname\relax
  \providecommand{\doi}[1]{DOI~\discretionary{}{}{}#1}\else
  \providecommand{\doi}{DOI~\discretionary{}{}{}\begingroup
  \urlstyle{rm}\Url}\fi

\bibitem{airoldi2008mixed}
Airoldi, E.M., Blei, D.M., Fienberg, S.E., Xing, E.P.: Mixed membership
  stochastic blockmodels.
\newblock Journal of machine learning research  (2008)

\bibitem{berberidis2018adadif}
Berberidis, D., Nikolakopoulos, A.N., Giannakis, G.B.: Adadif: Adaptive
  diffusions for efficient semi-supervised learning over graphs.
\newblock In: International Conference on Big Data. IEEE (2018)

\bibitem{chung}
Chung, F.R.: Spectral graph theory.
\newblock American Mathematical Soc. (1997)

\bibitem{donnat2018learning}
Donnat, C., Zitnik, M., Hallac, D., Leskovec, J.: Learning structural node
  embeddings via diffusion wavelets.
\newblock In: International Conference on Knowledge Discovery \& Data Mining.
  ACM (2018)

\bibitem{kondor2002diffusion}
Kondor, R.I., Lafferty, J.: Diffusion kernels on graphs and other discrete
  structures.
\newblock In: Proceedings of the 19th international conference on machine
  learning (2002)

\bibitem{DBLP:journals/corr/abs-1902-06105}
Li, Q., An, S., Li, L., Liu, W.: Semi-supervised learning on graph with an
  alternating diffusion process.
\newblock CoRR  (2019)

\bibitem{ma2011mining}
Ma, H., King, I., Lyu, M.R.: Mining web graphs for recommendations.
\newblock IEEE Transactions on Knowledge and Data Engineering  (2011)

\bibitem{newman2003mixing}
Newman, M.E., Girvan, M.: Mixing patterns and community structure in networks.
\newblock In: Statistical mechanics of complex networks, pp. 66--87. Springer
  (2003)

\bibitem{pmlr-v198-rossi22a}
Rossi, E., Kenlay, H., Gorinova, M.I., Chamberlain, B.P., Dong, X., Bronstein,
  M.M.: On the unreasonable effectiveness of feature propagation in learning on
  graphs with missing node features.
\newblock In: Proceedings of Machine Learning Research (2022)

\bibitem{thanou2017learning}
Thanou, D., Dong, X., Kressner, D., Frossard, P.: Learning heat diffusion
  graphs.
\newblock IEEE Transactions on Signal and Information Processing over Networks
  (2017)

\bibitem{tremblay2014graph}
Tremblay, N., Borgnat, P.: Graph wavelets for multiscale community mining.
\newblock IEEE Transactions on Signal Processing  (2014)

\bibitem{zachary1977information}
Zachary, W.W.: An information flow model for conflict and fission in small
  groups.
\newblock Journal of anthropological research  (1977)

\bibitem{zhu2005semi}
Zhu, X.: Semi-supervised learning with graphs.
\newblock Ph.D. thesis, Carnegie Mellon University (2005)

\bibitem{zhu2003semi}
Zhu, X., Ghahramani, Z., Lafferty, J.D.: Semi-supervised learning using
  gaussian fields and harmonic functions.
\newblock In: Proceedings of the 20th International conference on Machine
  learning (2003)

\end{thebibliography}

\end{document}